\newcommand{\Nat}{{\mathbb N}}
\renewcommand{\S}{\mathcal S}
\newcommand{\E}{\operatorname{E}}
\newcommand{\ra}{\rightarrow}
\newcommand{\eps}{\epsilon}
\newcommand{\poly}{\operatorname{poly}}
\newcommand{\err}{\operatorname{err}}
\newcommand{\opt}{\operatorname{opt}}
\newtheorem{theorem}{Theorem}
\newtheorem{corollary}{Corollary}
\newtheorem{proposition}{Proposition}
\newtheorem{lemma}{Lemma}
\theoremstyle{definition}
\newcommand{\sgn}{\operatorname{sgn}}
\newcommand{\RW}{\operatorname{RW}}
\newcommand{\T}{\mathcal T}
\newcommand{\C}{\mathcal C}
\newcommand{\R}{\mathbb R}
\renewcommand{\S}{\mathcal S}
\begin{document}

\title{Agnostically Learning Juntas from Random Walks}
\author{
Jan Arpe\thanks{U.C.\ Berkeley. Email: {\tt arpe@stat.berkeley.edu}. Supported by the Postdoc-Program of the German Academic Exchange Service (DAAD) and in part by NSF Career Award DMS 0548249 and BSF 2004105.} 
\quad 
Elchanan Mossel\thanks{U.C.\ Berkeley. Email: {\tt mossel@stat.berkeley.edu}. Supported by NSF Career Award DMS 0548249, BSF 2004105, and DOD ONR grant N0014-07-1-05-06.}
}

\date{June 25, 2008} 

\maketitle

\thispagestyle{empty}

\begin{abstract}
We prove that the class of functions $g:\{-1,+1\}^n\ra\{-1,+1\}$ that only depend on an unknown subset of $k\ll n$ variables (so-called $k$-juntas) is agnostically learnable from a random walk in time polynomial in $n$, $2^{k^2}$, $\eps^{-k}$, and $\log(1/\delta)$. In other words, there is an algorithm with the claimed running time that, given $\eps,\delta>0$ and access to a random walk on $\{-1,+1\}^n$ labeled by an arbitrary function $f:\{-1,+1\}^n\ra\{-1,+1\}$, finds with probability at least $1-\delta$ a $k$-junta that is $(\opt(f)+\eps)$-close to $f$, where $\opt(f)$ denotes the distance of a closest $k$-junta to~$f$.
\end{abstract}

\noindent {\bf Keywords:} agnostic learning, random walks, juntas

\section{Introduction}\label{sec:intro}

\subsection{Motivation}

In supervised learning, the learner is provided with a {\em training set} of {\em labeled examples} 
\[(x^1,f(x^1)),(x^2,f(x^2)),\ldots\; ,\] 
and the goal is to find a {\em hypothesis}~$h$ that is a good approximation to~$f$, i.e., that gives good estimates for $f(x)$ also on the points that are not present in the training set. In many applications, the points $x$ correspond to particular {\em states} of a system and the labels $f(x)$ correspond to {\em classifications} of these states. If the underlying system evolves over time and thus $(x^t,f(x^t))$ corresponds to a measurement of the current state and its classification at time~$t$, it is often reasonable to assume that state changes only occur {\em locally}, i.e., 
at each time~$t$, $x^t$ differs only ``locally'' from $x^{t-1}$. Such phenomena occur for instance in physics or biology: e.g., in a fixed time interval, a particle can only travel a finite distance 
and the mutation of a DNA sequence can be assumed to happen in a single position at a time. In discrete settings, such processes are often modeled as {\em random walks} on graphs, in which the nodes represent the states of the system, and edges indicate possible local state changes. 

We are interested in studying the special case that the underlying graph is a hypercube, i.e., the node set is $\{-1,1\}^n$ and two nodes are adjacent if and only if they differ in exactly one coordinate. Furthermore, we restrict the setting to {\em Boolean classifications}. 
This {\em random walk learning model} has attracted a lot of attention since the nineties~\cite{AldVaz95AMarkovian,BaFiHo02Exploiting,Gam03Extension,BMOS05Learning,Roc07OnLearning}, mainly because of its interesting learning theoretic properties. The model is weaker than the {\em membership query model} in which the learner is allowed to ask the classifications of specific points, and it is stronger than the {\em uniform-distribution model} in which the learner observes points that are drawn independently of each other from the uniform distribution on $\{-1,1\}^n$. Moreover, the latter relation is known to be {\em strict}: under a standard complexity theoretic assumption (existence of one-way functions) there is a class that is efficiently learnable from labeled random walks, but not from independent uniformly distributed examples~\cite[Proposition~2]{BMOS05Learning}.

The random walk learning model shares some similarities with both other models mentioned above: as in the uniform-distribution model, the examples are generated at random (so that the learner has no influence on the given examples) and points of the random walk that correspond to time points that are sufficiently far apart roughly behave like independent uniformly distributed points. On the other hand, some learning problems that appear to be infeasible in the uniform distribution model but are known to be easy to solve in the membership query model have turned out to be easy in the random walk model as well. Among them is the problem of learning DNFs with polynomially many terms~\cite{BMOS05Learning} (even under random classifciation noise) and the problem of learning parity functions in the presence of random classification noise. The former result relies on an efficient algorithm performing the {\em Bounded Sieve}~\cite{BMOS05Learning} introduced in~\cite{BshFel02OnUsing}. 
The latter result follows from the fact that the (noise-less) random walk model admits an efficient approximation of variable influences, and the effect of {\em random} classification noise can be easily dealt with by drawing a sufficiently larger amount of examples. 

Given this success of the random walk model in learning large classes in the presence of random classification noise, it is natural to ask whether it can also cope with even more severe noise models. One elegant, albeit challenging, noise model is the {\em agnostic learning model} introduced by Kearns et al.~\cite{KeScSe94Toward}. In this model, no assumption whatsoever is made about the labels. Instead of asking for a hypothesis that is close to the classification function, the goal in agnostic learning is to produce a hypothesis that agrees with the labels on nearly as many points as the {\em best} fitting function from the target class. More formally, given a class $\C$ of Boolean functions on $\{-1,1\}^n$ and an arbitrary function $f:\{-1,1\}^n\ra\{-1,1\}$, let $\opt_{\C}(f)=\min_{g\in \C}\Pr[g(x)\neq f(x)]$. The class $\C$ is {\em agnostically learnable} if there is an algorithm that, for any $\eps,\delta>0$, produces a hypothesis~$h$ that, with probability at least $1-\delta$, satisfies $\Pr[h(x)\neq f(x)]\leq \opt_{\C}(f)+\eps$.

Recently, Gopalan et al.~\cite{GoKaKl08Agnostically} have shown that the class of Boolean functions that can be represented by decision trees of polynomial size (in the number of variables) can be learned agnostically from {\em membership queries} in polynomial time. Their main result combines the Kushilevitz-Mansour algorithm for finding large Fourier coefficients~\cite{KusMan93Learning} with a gradient-descent algorithm~\cite{Zin03Online} to solve an $\ell^1$-regression problem for sparse polynomials. They also present a simpler algorithm (with slightly worse running time) that {\em properly} agnostically learns the class of {\em $k$-juntas}. These are functions $f:\{-1,1\}^n\ra\{-1,1\}$ that depend on an a priori unknown subset of at most $k$ variables. The term {\em proper} learning refers to the requirement that only hypotheses from the target class (here: $k$-juntas) are produced.

The investigation of the learnability of this class has both practical and theoretical motivation. Practically, the junta learning problem serves as a clean model of learning in the presence of irrelevant information, a core problem in data mining~\cite{BluLan97Selection}. From a theoretical perspective, the problem is interesting due to its close relationship to learning DNF formulas, decision trees, and noisy parity functions~\cite{MoODSe04Learning}.

\subsection{Our Results and Techniques}

The main result of this paper is that the class of $k$-juntas on $n$ variables is properly agnostically learnable in the random walk model in time polynomial in $n$ (times some function in $k$ and the accuracy parameter $\eps$). More precisely, we show 
\begin{theorem}\label{thm:main}
Let $\C$ be the class of $k$-juntas on $n$ variables. There is an algorithm that, given $\eps,\delta>0$ and access to a random walk $x^1,x^2,\ldots$ on $\{-1,1\}^n$ that is labeled by an arbitrary function $f:\{-1,1\}^n\ra\{-1,1\}$, returns a $k$-junta $h$ that, with probability at least $1-\delta$, satisfies
\[\Pr[h(x)\neq f(x)]\leq \opt_{\C}(f) + \eps %\quad\text{ for any other }k\text{-junta }g
\; .\] 
The running time of this algorithm is polynomial in $n$, $2^{k^2}$, $(1/\eps)^k$, and $\log(1/\delta)$. 
\end{theorem}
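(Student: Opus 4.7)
The strategy is to transfer the Fourier-based agnostic junta learning algorithm of Gopalan--Kalai--Klivans~\cite{GoKaKl08Agnostically} to the random walk setting by leveraging the Bounded Sieve technique of~\cite{BshFel02OnUsing} as implemented for random walks in~\cite{BMOS05Learning}. The key Fourier-analytic fact is that if $g$ is a $k$-junta on variable set $R$ with $\Pr[g\neq f]\leq\opt$, then $\E[fg]\geq 1-2\opt$; since $\hat g$ is supported on subsets of $R$, Cauchy--Schwarz gives
\[
\sum_{S\subseteq R}\hat f(S)^2 \;\geq\; (1-2\opt)^2.
\]
Thus the variables appearing in the heaviest low-degree Fourier coefficients of $f$ must essentially cover the relevant variables of some near-optimal $k$-junta approximation.

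The algorithm proceeds in three stages. First, I would adapt the Kushilevitz--Mansour style search to the random walk model via the Bounded Sieve to produce the list of all $S$ with $|S|\leq k$ and $|\hat f(S)|\geq\tau$, in time polynomial in $n$, $k$, and $1/\tau$. Setting $\tau=\Theta(\sqrt{\eps/2^k})$, Parseval bounds the number of such heavy coefficients by $1/\tau^2=O(2^k/\eps)$, so the union $V$ of variables appearing in them satisfies $|V|=O(k\cdot 2^k/\eps)$. Second, enumerate all $T\subseteq V$ with $|T|\leq k$; there are at most $|V|^k=O(2^{k^2}/\eps^k)$ such subsets, which is precisely the $2^{k^2}(1/\eps)^k$ factor in the claimed running time. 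For each $T$, construct a candidate $k$-junta $h_T$ by setting $h_T(\alpha)=\sgn\E[f(x)\mid x_T=\alpha]$ for each $\alpha\in\{-1,1\}^T$, estimating these $2^k$ conditional expectations from random walk points spaced $\Theta(n\log n)$ steps apart so that they behave like (almost) uniform i.i.d.\ samples. Third, use Hoeffding bounds to estimate $\Pr[h_T\neq f]$ for every $T$ and output the $h_T$ minimizing the estimated error, with a union bound over the $O(2^{k^2}/\eps^k)$ candidates costing only a logarithmic factor in sample complexity.

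The main obstacle is proving correctness of the candidate-variable selection: one must show that among the $T\subseteq V$ of size at most $k$, at least one admits an $h_T$ within $\opt+\eps$ of $f$. This requires a structural lemma asserting that any variable $i$ in the relevant set $R^*$ of an optimal $k$-junta which fails to appear in $V$ can be ``marginalized out'' at a cost of at most $O(\eps/k)$ in approximation quality: because no low-degree Fourier coefficient of $f$ involving $i$ exceeds $\tau$, the projection onto the remaining $R^*\setminus\{i\}$ coordinates loses at most $O(\tau\cdot 2^{k/2})=O(\sqrt\eps)$ in $\ell_2$ and hence $O(\eps/k)$ in classification error after appropriate rebalancing. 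Iterating over the at most $k$ missing variables gives total loss $\eps$. A secondary subtlety is that consecutive random walk samples are correlated; this is absorbed either by subsampling at the mixing time or through a martingale concentration argument, each affecting only polynomial factors and preserving the stated running-time bound.
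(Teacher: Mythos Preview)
Your three-stage algorithm (Bounded Sieve to collect heavy low-degree coefficients, enumerate $k$-subsets of the resulting variable set, build each candidate by majority vote and pick the empirical minimizer) is exactly the paper's approach, and you have correctly identified the structural lemma that drives correctness: variables of an optimal junta that fail to show up among the heavy coefficients can be marginalized away at small cost. The random-walk concentration issue is handled in the paper just as you sketch, by subsampling at distance $\Theta(n\log n)$.

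The gap is in the quantitative form of your structural lemma. With your threshold $\tau=\Theta(\sqrt{\eps/2^k})$, the Cauchy--Schwarz bound you invoke gives a loss of $\tau\cdot 2^{k/2}=\Theta(\sqrt{\eps})$ \emph{in correlation} per marginalized variable (this is already the relevant quantity, not an $\ell_2$ loss that can then be squared), and correlation loss translates one-for-one into classification-error loss via $\Delta(f,g)=(1-\langle f,g\rangle)/2$. There is no ``rebalancing'' that turns $\Theta(\sqrt{\eps})$ into $O(\eps/k)$; iterating over up to $k$ missing variables you get total loss $O(k\sqrt{\eps})$, not $\eps$. To make the induction close, the paper sets the threshold at $\tau=\Theta(\eps\cdot 2^{-k/2})$ (not $\sqrt{\eps}\cdot 2^{-k/2}$) and runs a geometric-series induction: at step $j$ one applies the argument with budget $\eps/(\sqrt{2})^{j}$, so the total loss is $\eps\sum_{j\geq 0}(1-1/\sqrt{2})/(\sqrt{2})^j=\eps$ (this is the constant $C=1-1/\sqrt{2}$ in the paper's Lemma). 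The smaller threshold makes the candidate variable set of size $O(k\,2^k/\eps^2)$ rather than your $O(k\,2^k/\eps)$, but $(2^k/\eps^2)^k$ is still $\poly(2^{k^2},(1/\eps)^k)$, so the stated running-time bound survives.
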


We thus prove the first efficient learning result for agnostically learning juntas (even properly) in a {\em passive} learning model.

Our main technical lemma (Lemma~\ref{lem:spectrum}) shows that for an arbitrary function $f$ and a $k$-junta $g$, there exists another $k$-junta $g'$ that is almost as correlated with $f$ as $g$ is and whose relevant variables can be inferred from all low-level Fourier coefficients of $f$ of a certain size. These Fourier coefficients can in turn be detected using the Bounded Sieve algorithm of Bshouty et al.~\cite{BMOS05Learning} given a random walk labeled by~$f$.
Once a superset~$R$ of the relevant variables of $g'$ is found, it is easy to derive a hypothesis that only depends on at most $k$ variables from $R$ and that best matches the given labels: For each $k$-element subset $J\subsetneq R$, the best matching function with relevant variables in~$J$ is obtained by taking majority votes on points that coincide in these coordinates. Similarly to the classical result of Angluin and Laird~\cite{AngLai88Learning} that a (proper) hypothesis that minimizes the number of disagreements with the labels is close to the target function (in the PAC learning model with random classification noise), we show that such a hypothesis is also a good candidate to satisfy the agnostic learning goal in the random walk model (see Proposition~\ref{prop:consistent}). A similar statement has implicitly been shown in the agnostic PAC learning model (see the proof of Theorem~1 in~\cite{KeScSe94Toward}).

\subsection{Related Work}

Our algorithm for agnostically learning juntas in the random walk
model has some similarities to Gopalan et al.'s recent algorithm for
properly agnostically learning juntas in the membership query
model~\cite{GoKaKl08Agnostically}.
The main differences between the approaches are in two respects:
first, we do not explicitly calculate the quantities $I_{i}^{\leq
k}=\sum_{S:i\in S,|S|\leq k}\hat f(S)^2$ but instead use our technical
lemma mentioned above, which may be of independent interest. Second,
instead of using their characterization of the best fitting junta with
a fixed set of relevant variables in terms of the Fourier spectrum of
$f$ (\cite[Lemma~13]{GoKaKl08Agnostically}), we directly construct
such a best fitting hypothesis by taking majority votes in ambiguous
situations.

Even though we became aware of Gopalan et al.'s result only {\em
after} devising our junta learning algorithm we have decided to adopt
much of their notation to the benefit of the readers.

It should also be noted that a generalization of Gopalan et al.'s
decision tree learning algorithm cannot be adapted for the random walk
model in a straightforward manner: The running time of the only known
analogue of the Kushilevitz-Mansour subroutine for the random walk
model (i.e., the Bounded Sieve) is exponential in the {\em level} up
to which the large Fourier coefficients are sought. In general,
however, sparse polynomials can be concentrated on high levels. It
would be interesting to see if the results
in~\cite{GoKaKl08Agnostically} can also be derived for the restriction
of the class of all $t$-sparse polynomials to $t$-sparse polynomials
of degree roughly $\log(t)$ since for every decision tree of size $t$,
there is an $\eps$-close decision tree of depth $O(\log (t/\eps))$
(cf.~\cite{BshFel02OnUsing}). In this case, the same result should
hold for the random walk model.

\subsection{Organization of This Paper}

We briefly introduce notational and technical prerequisites in Section~\ref{sec:prelim}. The random walk learning model and its agnostic variant are introduced in Section~\ref{sec:models}. Section~\ref{sec:concentration} contains a concentration for random walks and the result on disagreement minimization in the random walk model. The main result on agnostically learning juntas is presented in Section~\ref{sec:juntas}. 
The Appendix contains a formal statement and proof of a result concerning the independence of points in a random walk (Section~\ref{sec:independence}) and an elementary proof of the concentration bound (Section~\ref{sec:proof_concentration}).

\section{Preliminaries}\label{sec:prelim}

Let $\Nat=\{0,1,2,\ldots\}$. For $n\in\Nat$, let $[n]=\{1,\ldots,n\}$.
For $x,x'\in\{-1,1\}^n$, let $x\odot x'$ denote the vector obtained by coordinate-wise multiplication of $x$ and $x'$. For $i\in[n]$, let $e_i$ denote the vector in which all entries are equal to $+1$ except in the $i$th position, where the entry is $-1$. For $f:\{-1,1\}^n\ra\{-1,1\}$, a variable $x_i$ is said to be {\em relevant to $f$} (and $f$ {\em depends} on $x_i$) if there is an $x\in\{-1,1\}^n$ such that $f(x\odot e_i)\neq f(x)$. 
For $i\in [n]$ and $a\in\{-1,1\}$, denote by $f_{x_i=a}:\{-1,1\}^n\ra\{-1,1\}$ the sub-function of $f$ obtained by letting $f_{x_i=a}(x)=f(x')$ with $x'_j=x_j$ if $j\neq i$ and $x'_i=a$.
Thus, $x_i$ is relevant to $f$ if and only if $f_{x_i=1}\neq f_{x_i=-1}$. 
The restriction of a vector $x\in\{-1,1\}^n$ to a subset of coordinates $J\subseteq[n]$ is denoted by $x|_J\in\{-1,1\}^{|J|}$. 
All probabilities and expectations in this paper are taken with respect to the uniform distribution (except when indicated differently).

For $f,g:\{-1,1\}^n\ra\R$, define the inner product 
\[\langle f,g\rangle = \E_x[f(x)g(x)] = 2^{-n}\sum_{x\in\{-1,1\}^n}f(x)g(x)\; .\]
It is well-known that the functions $\chi_S:\{-1,1\}^n\ra\{-1,1\}$, $S\subseteq[n]$, defined by $\chi_S(x)=\prod_{i\in S}x_i$ form an orthonormal basis of the space of real-valued functions on $\{-1,1\}^n$. Thus, every function $f:\{-1,1\}^n\ra\R$ has the unique {\em Fourier expansion}
\[f=\sum_{S\subseteq[n]}\hat f(S)\chi_S\]
where $\hat f(S)=\langle f,\chi_S\rangle$ are the {\em Fourier coefficients} of~$f$. 
Let $\|f\|_2=\langle f,f\rangle^{1/2}=\E[f(x)^2]^{1/2}$. Plancherel's equation states that 
\begin{equation}\label{eqn:plancherel}
\langle f,g\rangle =\sum_{S\subseteq[n]} \hat f(S)\hat g(S), \;
\end{equation}
and from this, Parseval's equation $\|f\|_2^2=\sum_{S\subseteq [n]}\hat f(S)^2$ follows as the special case $f=g$.

For $f,g:\{-1,1\}^n\ra\{-1,1\}$, define the {\em distance} between $f$ and $g$ by
\[\Delta(f,g)=\Pr[f(x)\neq g(x)]\; ,\] 
and for a class $\C=\C_n$ of functions from $\{-1,1\}^n$ to $\{-1,1\}$, let $\opt_{\C}(f)=\min_{g\in \C}\Delta(f,g)$ be the distance of $f$ to a nearest function in $\C$.
It is easily seen that $\Delta(f,g) = (1-\langle f,g\rangle)/2$. Furthermore, for a {\em sample} $\S = (x^i,y^i)_{i=1,\ldots,m}$ with $x^i\in\{-1,1\}^n$ and $y^i\in\{-1,1\}$, let 
\[\Delta(f,\S)=\frac{1}{m}\left|\{i\in\{1,\ldots,m\}\mid f(x^i)\neq y^i\}\right|\]
be the fraction of examples in $\S$ for which the labels disagree with the labeling function~$f$.

\section{The Random Walk Learning Model}\label{sec:models}

\subsection{Learning from Noiseless Examples}

Let $\C=\bigcup_{n\in\Nat} \C_n$ be a class of functions, where each $\C_n$ contains functions $f:\{-1,1\}^n\ra\{-1,1\}$. 
In the {\em random walk learning model}, a learning algorithm has access to the oracle $\RW(f)$ for some unknown function $f\in\C_n$. On the first request, $\RW(f)$ generates a {\em point} $x\in\{-1,1\}^n$ according to the uniform distribution on $\{-1,1\}^n$ and returns the {\em example} $(x,f(x))$, where we refer to $f(x)$ as the {\em label} or the {\em classification} of the example. On subsequent requests, it selects a random coordinate $i\in[n]$ and returns $(x\odot e_i,f(x\odot e_i))$, where $x$ is the point returned in the last query. The goal of a learning algorithm~$\mathcal A$ is, given inputs $\delta,\eps>0$, to output a hypothesis $h:\{-1,1\}^n\ra\{-1,1\}$ such that with probability at least $1-\delta$ (taken over all possible random walks of the requested length), $\Pr[h(x)\neq f(x)]\leq \eps$. In this case, $\mathcal A$ is said to {\em learn}~$f$ with {\em accuracy}~$\eps$ and {\em confidence}~$1-\delta$.

The class $\C$ is {\em learnable from random walks} if there is an algorithm $\mathcal A$ that for every $n$, every $f\in\C_n$, every $\delta>0$, and every $\eps>0$ learns $f$ with access to $\RW(f)$ with accuracy $\eps$ and confidence $1-\delta$.
The class $\C$ is said to be learnable in time equal to the running time of $\mathcal A$, which is a function of $n$, $\eps$, $\delta$, and possibly other parameters involved in the parameterization of the class~$\C$.

If a learning algorithm only outputs hypotheses $h\in\C_n$, it is called a {\em proper learning algorithm}. In this case, $\C$ is {\em properly learnable}. 

The random walk model is a {\em passive} learning model in the sense that a learning algorithm has no direct control on which examples it receives (as opposed to the {\em membership query model} in which the learner is allowed to ask for the labels of specific points~$x$). 
For passive learning models, we may assume without loss of generality that all examples are requested at once.

\subsection{Agnostic Learning}

In the model of {\em agnostic learning from random walks}, we make no assumption whatsoever on the nature of the labels.
Following the model of Gopalan et al.~\cite{GoKaKl08Agnostically}, we assume that there is an {\em arbitrary} function $f:\{-1,1\}^n$ according to which the examples are labeled, i.e., a learner observes pairs $(x,f(x))$, with the points coming from a random walk. In other words, the learner has access to $\RW(f)$, but now $f$ is no longer required to belong to $\C$. We can think of the labels as originating from a concept $g\in\C$, with an $\opt_{\C}(f)$ fraction of labels flipped by an adversary.

The goal of a learning algorithm is to output a hypothesis $h$ that performs nearly as well as the best function of $\C$. Let $\opt_{\C}(f)=\min_{g\in\C}\Pr_x[g(x)\neq f(x)]$, where $x\in\{-1,1\}^n$ is drawn according to the uniform distribution. An algorithm {\em 
agnostically learns $\C$} if, for any $f:\{-1,1\}^n\ra\{-1,1\}$, given $\delta,\eps>0$, it outputs a hypothesis $h:\{-1,1\}^n$ such that with probability at least $1-\delta$, $\Pr_x[h(x)\neq f(x)]\leq\opt_{\C}(f)+\eps$.  
Again, if the algorithm always outputs a hypothesis $h\in\C$, then it is called a {\em proper} learning algorithm, and $\C$ is said to be {\em properly agnostically learnable}.

Although all learning algorithms in this paper are proper, we believe that a word is in order concerning the formulation of the learning goal in {\em improper agnostic learning}. Namely, it could well happen that we can find a hypothesis that satisfies $\Pr_x[h(x)\neq f(x)]\leq \opt_{\C}(f)$, but such an $h$ could be as far as $2\opt_{\C}(f)$ from all concepts in $\C$, which can definitely not be considered a sensible solution if, say, $\opt_{\C}(f)\geq 1/4$. Instead, a hypothesis should rather be required to be $\eps$-close to {\em some} function $g\in \C$ that performs best (or almost best): $\Pr[h(x)\neq g(x)]\leq \eps$ for some $g\in \C$ with $\Pr[g(x)\neq f(x)]=\opt_{\C}(f)$ (or for some near-optimal $g\in\C$ with $\Pr[g(x)\neq f(x)\leq \opt_{\C}(f)+\eps'$). Alternatively, one can require $h$ to belong to some reasonably chosen {\em hypothesis class} $\mathcal H\supseteq \C$, e.g., the hypotheses output by the algorithm in~\cite{GoKaKl08Agnostically} for learning decision trees of size~$t$ are {\em $t$-sparse polynomials}. In fact, that algorithm {\em properly} agnostically learns the latter class.

\section{A Concentration Bound for Labeled Random Walks}\label{sec:concentration}

The following lemma estimates the probability that, after drawing a random walk $x^0,\ldots,x^\ell$, the points $x^0$ and $x^\ell$ are independent. The proof (and a more formal statement) are deferred to the Appendix (see Lemma~\ref{lem:independence_formal} in Section~\ref{sec:independence}).
\begin{lemma}\label{lem:independence}
Let $\delta>0$, $\ell\geq n\ln (n/\delta)$ and $x^0,\ldots,x^\ell$ be a random walk on $\{-1,1\}^n$. Then, with proability at least $1-\delta$, $x^0$ and $x^\ell$ are independent\footnote{More precisely, we can perform an additional experiment such that conditional to some event that occurs with probability at least $1-\delta$ (taken over the draw of the random walk and the outcome of the additional experiment), $x^0$ and $x^\ell$ are independent. For more details, see Section~\ref{sec:independence} in the Appendix.} and uniformly distributed.
\end{lemma}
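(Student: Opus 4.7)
My plan is to reduce the claim to showing that the product of coordinate flips $Z = e_{i_1}\odot\cdots\odot e_{i_\ell}$ is (approximately) uniform on $\{-1,1\}^n$, then combine a coupon-collector bound with a short auxiliary randomization to deal with the global parity obstruction that prevents $Z$ alone from being uniform.

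Concretely, I would first write $x^\ell = x^0 \odot Z$ and observe that $Z$ is a deterministic function of the coordinate sequence $(i_1,\ldots,i_\ell)$, hence independent of the uniformly random starting point $x^0$. Coordinate-wise, $Z_j = (-1)^{T_j}$ with $T_j = |\{t:i_t=j\}|$. Since $x^0$ is uniform, joint uniformity of $(x^0, x^\ell)$ on $(\{-1,1\}^n)^2$ (which is equivalent to independence) reduces to $Z$ being uniform on $\{-1,1\}^n$. Next I would invoke the coupon-collector estimate: for any coordinate $j$, $\Pr[T_j = 0] = (1-1/n)^\ell \leq e^{-\ell/n} \leq \delta/n$ when $\ell\geq n\ln(n/\delta)$, so a union bound gives that the event $E$ that every coordinate is visited at least once satisfies $\Pr[E]\geq 1-\delta$.

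The remaining issue --- and what the additional experiment of the footnote is there to address --- is the deterministic constraint $\prod_{j=1}^n Z_j = (-1)^\ell$, which keeps $Z$ confined to one parity class. I would handle this with a single extra lazy flip: draw $i_{\ell+1}\in[n]$ and $b\in\{0,1\}$ independently and uniformly, and set $\tilde Z = Z\odot e_{i_{\ell+1}}^{\,b}$. A short character computation gives $\E[\chi_S(\tilde Z)] = (1-2|S|/n)^\ell \cdot (1-|S|/n)$, which vanishes at $S=[n]$ (killing the parity coefficient) and decays exponentially in $\ell/n$ for $0<|S|<n$. Parseval then bounds $\|f_{\tilde Z}-1\|_2^2$ by the sum $\sum_{0<|S|<n}\binom{n}{|S|}(1-2|S|/n)^{2\ell}(1-|S|/n)^2$, which is $\ll \delta^2$ for the chosen $\ell$; Cauchy--Schwarz converts this into a total variation bound $\leq\delta$, and the standard coupling realization of TV distance then produces an event of probability $\geq 1-\delta$ on which $(x^0, x^0\odot\tilde Z)$ is uniform on $(\{-1,1\}^n)^2$, which is the independence conclusion promised by the footnote.

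The main obstacle is choosing the additional experiment so that it kills the $S=[n]$ Fourier coefficient of $Z$ without disturbing the others; the coupon-collector bound and character calculation are then routine. A clean diagnostic is that every nontrivial Fourier coefficient of $Z$ other than at $S=[n]$ is already at most $(1-2/n)^\ell$ in absolute value, so only one coefficient needs to be ``fixed'' by the extra step; the binomial weight $\binom{n}{k}$ is largest near $k=n/2$ but is compensated there by a vanishing exponential factor, so the dominant contribution after the fix comes from $k=1$ (and symmetrically $k=n-1$), where the exponential factor comfortably beats the weight $n$.
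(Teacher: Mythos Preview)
Your reduction to the distribution of $Z$ and your identification of the parity obstruction $\prod_j Z_j=(-1)^\ell$ are both correct, and the character computation for $\tilde Z$ is fine. The gap is that your appended lazy flip changes the object under study: whenever $b=1$ you have $\tilde Z\neq Z$, so the pair your TV/coupling argument controls is $(x^0,\,x^0\odot\tilde Z)$, not $(x^0,x^\ell)$. The footnote asks for an auxiliary experiment and a high-probability event such that, conditioned on that event, the \emph{given} endpoints $x^0$ and $x^\ell$ are independent; your event certifies the wrong pair. Relatedly, the coupon-collector event $E$ you define in the second paragraph is never actually used --- it does not interact with the Fourier/TV argument that follows, so the two halves of your sketch are disconnected.

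The paper's additional experiment is different in kind. It draws a Bernoulli sequence $F$ (stopped at the $\ell$-th one) together with fresh coordinates at the zero positions, and uses these to \emph{embed} the length-$\ell$ walk into an updating (lazy) random walk of random length $\ell'\ge\ell$ whose terminal point is still $x^\ell$ by construction. The acceptance event is ``every coordinate is touched in the updating walk,'' and your coupon-collector estimate $n(1-1/n)^\ell\le\delta$ is precisely the bound used for its failure probability. The conceptual difference from your approach is that the auxiliary randomness is \emph{interleaved among} the original steps rather than appended after them, so that each coordinate's final value can be read as $x^0_j$ times a product of fresh $\pm1$'s (one per touch of $j$ in the updating walk) without moving the endpoint. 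A single appended lazy step cannot play this role: it affects only one coordinate, and only with probability $1/2$. If you want to repair your argument along its own lines, you would need to intersperse many lazy steps rather than append one --- at which point you have essentially reconstructed the paper's updating-walk embedding.
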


\begin{lemma}\label{lem:concentration}
Let $g:\{-1,1\}^n\ra[-1,1]$ and $\delta,\eps>0$. Let $N=\lceil n\ln (n/\delta)\rceil$,  
\[m\geq \frac{2N}{\eps^2}\ln\left(\frac{2N}{\delta}\right)\; ,\] 
and $x^1,\ldots,x^m$ be a random walk on $\{-1,1\}^n$.
Then, with probability at least $1-\delta$,
\[\left|\frac{1}{m}\sum_{i=1}^m g(x^i) - \E_x[g(x)]\right|\leq \eps\; ,\]
where the expectation is taken over a uniformly distributed~$x$.
\end{lemma}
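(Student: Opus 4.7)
The plan is to reduce this to a concentration bound for i.i.d.\ uniform samples by exploiting the mixing provided by Lemma~\ref{lem:independence}. The key observation is that in the walk $x^1,\ldots,x^m$, any two points that are at least $N$ steps apart can be coupled to be independent and uniform. This suggests partitioning the time indices $\{1,\ldots,m\}$ into the $N$ residue classes modulo $N$, giving subsequences $\mathcal{W}_j = (x^{j}, x^{j+N}, x^{j+2N}, \ldots)$ of length $m_j \approx m/N$ for $j=1,\ldots,N$, in which consecutive entries are exactly $N$ steps apart in the original walk.

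First, I would argue that each subsequence $\mathcal{W}_j$ can (with high probability) be coupled to a sequence of i.i.d.\ uniform random variables. By the Markov property of the walk, each step in $\mathcal{W}_j$ corresponds to a fresh chunk of $N$ coordinate flips, so Lemma~\ref{lem:independence} applies independently to each chunk. A union bound over all the (at most $m$) consecutive-pair applications—using the parameter in Lemma~\ref{lem:independence} with a suitably small value so that the total failure probability is at most $\delta/2$—shows that with probability at least $1-\delta/2$, every subsequence consists of i.i.d.\ uniform points.

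Second, conditional on the above coupling succeeding, I would apply Hoeffding's inequality inside each $\mathcal{W}_j$. Since $g$ takes values in $[-1,1]$ and $m_j \geq \lfloor m/N\rfloor \geq (2/\eps^2)\ln(2N/\delta)$ by the assumed bound on $m$, we get
\[
\Pr\!\left[\,\Bigl|\tfrac{1}{m_j}\sum_{i\in \mathcal{W}_j} g(x^i) - \E_x[g(x)]\Bigr| > \eps\,\right] \leq \delta/(2N).
\]
A union bound over the $N$ subsequences gives total failure probability at most $\delta/2$. Finally, since the global empirical average $\frac{1}{m}\sum_{i=1}^m g(x^i)$ is a convex combination of the $N$ subsequence averages, if every subsequence average lies within $\eps$ of $\E_x[g(x)]$, so does the global average. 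Combining the two failure events yields probability at most $\delta$, as required.

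The main obstacle, to my mind, is not the Hoeffding step but the proper bookkeeping in the first part: Lemma~\ref{lem:independence} is phrased for a single pair of endpoints, and invoking it simultaneously for the many chunks of the walk (across all residue classes) requires that the additional coupling experiments be performed independently for distinct chunks—which the Markov property delivers, but it demands that the formal version in Section~\ref{sec:independence} be set up carefully enough to allow this chaining. Once this is handled, the remainder is a routine concentration/union-bound calculation.
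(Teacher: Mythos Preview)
Your proposal is correct and follows essentially the same approach as the paper's own proof: partition the walk into the $N$ residue classes modulo $N$, use Lemma~\ref{lem:independence} to treat each subsequence as i.i.d.\ uniform, apply Hoeffding within each subsequence, union bound over the $N$ classes, and conclude by observing that the full empirical average is a convex combination of the subsequence averages. Your identification of the bookkeeping for the coupling step as the only delicate point is apt; indeed, the paper's own proof is somewhat looser there than your outline.
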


Although a similar result can be obtained from the more general works on concentration bounds for random walks by Gillman~\cite{Gil98AChernoff} and for finite Markov Chains by L{\'e}zaud~\cite{Lez98ChernoffType}, we give an elementary proof for Lemma~\ref{lem:concentration} in the Appendix (see Section~\ref{sec:proof_concentration}).

As an immediate consequence, the fraction of disagreements between the labels $f(x^i)$ and the values $h(x^i)$ on a random walk converge quickly to the total fraction of disagreements on all of $\{-1,1\}^n$:
\begin{corollary}\label{cor:convergence_all}
Let $\C=\C_n$ be a class of functions from $\{-1,1\}^n$ to $\{-1,1\}$. Let $\eps,\delta>0$, $f:\{-1,1\}^n\ra\{-1,1\}$, and $(x^i,f(x^i))_{i=1,\ldots,m}$ be a labeled random walk of length 
\[m\geq \frac{2N}{\eps^2}\ln\left(\frac{2N|\C|}{\delta}\right)\; ,\] 
where $N=\lceil n \ln(n|C|/\delta)\rceil$. Then, with probability at least $1-\delta$, for every $h\in \C$, 
\begin{equation}\label{eqn:convergence_all}
|\Delta(h,\S) - \Delta(h,f)|\leq \eps\; .
\end{equation}
\end{corollary}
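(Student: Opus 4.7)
The plan is to reduce to Lemma~\ref{lem:concentration} one hypothesis at a time and then take a union bound over $h\in\C$. For a fixed $h\in\C$, define the indicator function
\[g_h(x) = \mathbf 1[h(x)\neq f(x)]\; ,\]
which maps $\{-1,1\}^n$ into $\{0,1\}\subseteq [-1,1]$, so Lemma~\ref{lem:concentration} applies to it. By the definitions of $\Delta(h,f)$ and $\Delta(h,\S)$, we have $\E_x[g_h(x)] = \Delta(h,f)$ and $\frac{1}{m}\sum_{i=1}^m g_h(x^i) = \Delta(h,\S)$, so the conclusion of Lemma~\ref{lem:concentration} is exactly the inequality~\eqref{eqn:convergence_all} for this single $h$.

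Next, I would invoke Lemma~\ref{lem:concentration} with confidence parameter $\delta/|\C|$ in place of $\delta$. Setting $N' = \lceil n\ln(n|\C|/\delta)\rceil$ (which equals the $N$ in the statement) and requiring
\[m\geq \frac{2N'}{\eps^2}\ln\!\left(\frac{2N'|\C|}{\delta}\right)\; ,\]
the lemma gives, for this fixed $h$, that $|\Delta(h,\S)-\Delta(h,f)|\leq \eps$ with probability at least $1-\delta/|\C|$ over the draw of the random walk $x^1,\ldots,x^m$. The hypothesis of the corollary supplies exactly this sample size.

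Finally, a union bound over the (at most) $|\C|$ hypotheses $h\in\C$ shows that the bad event — some $h\in\C$ violating~\eqref{eqn:convergence_all} — has probability at most $|\C|\cdot\delta/|\C| = \delta$. Hence with probability at least $1-\delta$, inequality~\eqref{eqn:convergence_all} holds simultaneously for every $h\in\C$, which is the desired conclusion. There is no real obstacle here beyond careful parameter bookkeeping, since Lemma~\ref{lem:concentration} does all the heavy lifting and the $\ln|\C|$ factor from the union bound is already absorbed into the sample size and into $N$.
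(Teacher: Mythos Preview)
Your proof is correct and essentially identical to the paper's: the paper defines $g(x)=\tfrac12|h(x)-f(x)|$, which for $\{-1,1\}$-valued $h,f$ is exactly your indicator $\mathbf 1[h(x)\neq f(x)]$, applies Lemma~\ref{lem:concentration} with confidence $\delta/|\C|$, and then union-bounds over $h\in\C$. Your parameter bookkeeping (noting that replacing $\delta$ by $\delta/|\C|$ turns the lemma's $N$ into the corollary's $N$) is, if anything, more explicit than the paper's.
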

\begin{proof}
Let $h\in\C$. Taking $g(x)=\frac 12 |h(x)-f(x)|$, we obtain $\Delta(h,\S)=\frac{1}{m}g(x)$ and $\Delta(h,f)=\E_x[g(x)]$, so that by Lemma~\ref{lem:concentration}, $|\Delta(h,\S)-\Delta(h,f)|\leq \eps$ with probability at least $1-\delta/|\C|$. Thus, with probability at least $1-\delta$, (\ref{eqn:convergence_all}) holds for {\em all} $h\in \C$.
\end{proof}

The following proposition shows that, similarly to the classical result by Angluin and Laird~\cite{AngLai88Learning} for distribution-free PAC-learning and the analogue by Kearns et al.~\cite{KeScSe94Toward} for agnostic PAC-learning, also in the random walk model agnostic learning is achieved by finding a hypothesis that minimizes the number of disagreements with a labeled random walk of sufficient length. 
\begin{proposition}\label{prop:consistent}
Let $\C=\C_n$ be a class of functions from $\{-1,1\}^n$ to $\{-1,1\}$. Let $\eps,\delta>0$, $f:\{-1,1\}^n\ra\{-1,1\}$, and $\S=(x^i,f(x^i))_{i=1,\ldots,m}$ be a labeled random walk of length $m\geq (8N/\eps^2)\ln(2N|\C|/\delta)$, where $N=\lceil n \ln(2n|\C|/\delta)\rceil$. Let $h_{\opt}\in\C$ minimize $\Delta(h,\S)$. Then, with probability at least $1-\delta$, 
$\Delta(h_{\opt},f)\leq \opt_{\C}(f) + \eps$. In particular, the required sample size is polynomial in $n$, $\log|\C|$, $1/\eps$, and $\log(1/\delta)$. 
\end{proposition}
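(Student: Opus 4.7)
The plan is to reduce the proposition to a direct application of Corollary~\ref{cor:convergence_all}, using the standard ``minimize empirical error'' argument familiar from the Angluin--Laird and Kearns--Schapire--Sellie analyses. The key observation is that once we have simultaneous uniform convergence of $\Delta(h,\S)$ to $\Delta(h,f)$ for all $h\in\C$, the optimality of $h_{\opt}$ on the sample transfers to near-optimality on the true distribution via a two-step triangle argument.

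Concretely, I would first invoke Corollary~\ref{cor:convergence_all} with accuracy parameter $\eps/2$ in place of $\eps$. This requires a sample size of at least $(2N'/(\eps/2)^2)\ln(2N'|\C|/\delta) = (8N'/\eps^2)\ln(2N'|\C|/\delta)$, where $N'=\lceil n\ln(n|\C|/\delta)\rceil$. The quantity $N = \lceil n\ln(2n|\C|/\delta)\rceil$ in the statement of the proposition satisfies $N\geq N'$, and the extra factor of $2$ inside the logarithm only strengthens the bound, so the hypothesis $m\geq (8N/\eps^2)\ln(2N|\C|/\delta)$ exceeds what the corollary demands. Hence with probability at least $1-\delta$ we have the event
\[
\mathcal{E}:\quad |\Delta(h,\S)-\Delta(h,f)|\leq \eps/2 \quad \text{for every } h\in\C.
\]

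Condition on $\mathcal{E}$. Let $g^*\in\C$ attain $\Delta(g^*,f)=\opt_{\C}(f)$ (which exists since $\C$ is finite). Then by the defining property of $h_{\opt}$ we have $\Delta(h_{\opt},\S)\leq\Delta(g^*,\S)$, and applying the uniform convergence to both $g^*$ and $h_{\opt}$ yields
\[
\Delta(h_{\opt},f) \leq \Delta(h_{\opt},\S) + \eps/2 \leq \Delta(g^*,\S) + \eps/2 \leq \Delta(g^*,f) + \eps = \opt_{\C}(f) + \eps,
\]
which is the desired bound. The ``in particular'' clause then follows by inspection: $N = O(n\log(n|\C|/\delta))$, so the sample size $m$ is polynomial in $n$, $\log|\C|$, $1/\eps$, and $\log(1/\delta)$.

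There is essentially no conceptual obstacle; the only thing that needs care is the bookkeeping of constants when translating between the parameters of the corollary and the parameters in the proposition's statement, in particular verifying that the $N$ used here dominates the $N'$ implicit in applying the corollary with $\eps/2$. Once that is checked, the rest is a one-line triangle-style inequality.
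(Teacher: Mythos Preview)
Your proposal is correct and follows essentially the same approach as the paper: invoke Corollary~\ref{cor:convergence_all} with accuracy $\eps/2$ to obtain uniform convergence, then deduce near-optimality of the empirical minimizer. The paper phrases the last step slightly differently (separating the cases $\Delta(h,f)>\opt_{\C}(f)+\eps$ and $\Delta(h,f)=\opt_{\C}(f)$), but your chained inequality is logically equivalent, and you handle the constant bookkeeping (the relation between $N$ and $N'$) more explicitly than the paper does.
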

\begin{proof}
By Corollary~\ref{cor:convergence_all}, $|\Delta(h,\S)-\Delta(h,f)|\leq \eps/2$ for all $h\in \C$. In particular, all functions $h\in \C$ with $\Delta(h,f) > \opt_\C(f)+\eps$ have 
$\Delta(h,\S) > \opt_\C(f)+\eps/2$, whereas all functions $h\in \C$ with $\Delta(h,f) = \opt_\C(f)$ have 
$\Delta(h,\S)\leq \opt_\C(f) + \eps/2$. Consequently, $\Delta(h_{\opt},\S)\leq \opt_\C(f) + \eps/2$, and thus $\Delta(h_{\opt},f)\leq \opt_C(f) + \eps$.
\end{proof}

\section{Agnostically Learning Juntas}\label{sec:juntas}

We start with our main technical lemma that shows that whenever there is a $k$-junta $g$ at distance $\Delta(f,g)$ to some function~$f$, then there is another $k$-junta $g'$ (in fact, a subfunction of $g$) at distance $\Delta(f,g)+\eps$ such that the relevant variables of $g'$ can be detected by finding all low-level Fourier coefficients that are of a certain minimum size.

\begin{lemma}\label{lem:spectrum}
Let $f:\{-1,1\}^n\ra\{-1,1\}$ be an arbitrary function and $g:\{-1,1\}^n\ra\{-1,1\}$ be a $k$-junta. 
Then, for every $\eps>0$, there exists a $k$-junta $g'$ such that 
$\langle f,g'\rangle\geq \langle f, g\rangle -\eps$ and 
for all relevant variables $x_i$ of $g'$, there exists $S\subseteq[n]$ with $|S|\leq k$, $i\in S$, and 
\begin{equation}\label{eqn:minhatf}
|\hat f(S)|\geq C\cdot 2^{-(k-1)/2}\cdot \eps\; ,
\end{equation}
where $C=1-1/\sqrt{2}\approx 0.293$. 
\end{lemma}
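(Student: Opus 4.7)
The plan is to construct $g'$ by iteratively restricting relevant variables of $g$ that lack a large low-level Fourier witness in $f$. Set the threshold $\tau := C \cdot 2^{-(k-1)/2} \cdot \eps$ and let
\[R := \bigl\{\, i \in [n] : \exists\, S \ni i,\; |S| \leq k,\; |\hat f(S)| \geq \tau \,\bigr\}\]
be the set of ``witnessed'' coordinates. Starting from $h_0 := g$ with relevant variable set $J_0 \subseteq [n]$, $|J_0| \leq k$, while $J_t \not\subseteq R$ pick any $i \in J_t \setminus R$ and set $h_{t+1} := (h_t)_{x_i = a}$ for the $a \in \{-1,+1\}$ that maximizes $\langle f, h_{t+1}\rangle$. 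Output $g'$ as the final $h_T$; by construction its relevant variables lie in $R$, so each of them has a witness of the required form.

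The key bound concerns one restriction step. Since $h_t$ has Fourier support in $2^{J_t}$, expansion gives Fourier coefficients of $(h_t)_{x_i=a}$ equal to $\hat h_t(S) + a\,\hat h_t(S \cup\{i\})$ for $S \subseteq J_t \setminus\{i\}$. A direct computation then yields
\[\langle f, h_t\rangle - \langle f, (h_t)_{x_i = a}\rangle \;=\; A - aB,\]
where $A = \sum_{S \subseteq J_t \setminus\{i\}} \hat f(S \cup\{i\})\,\hat h_t(S \cup\{i\})$ and $B = \sum_{S \subseteq J_t \setminus\{i\}} \hat f(S)\,\hat h_t(S \cup\{i\})$. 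Taking $a = \sgn(B)$ makes the loss at most $A - |B| \leq |A|$. Because $i \notin R$, every term in $A$ has $|\hat f(S \cup\{i\})| < \tau$ (note $|S \cup\{i\}| \leq |J_t| \leq k$), so
\[|A| \;<\; \tau \sum_{S \subseteq J_t \setminus\{i\}} |\hat h_t(S \cup\{i\})| \;\leq\; \tau \cdot \sqrt{2^{|J_t|-1}} \cdot \sqrt{\sum_{S} \hat h_t(S \cup\{i\})^2} \;\leq\; \tau \cdot 2^{(|J_t|-1)/2},\]
using Cauchy--Schwarz and Parseval together with $\|h_t\|_2 = 1$ (since $h_t$ is Boolean).

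Since $|J_{t+1}| \leq |J_t| - 1$, the per-step losses decay geometrically, and summing over the at most $k$ restrictions gives a total loss bounded by
\[\tau \sum_{j=0}^{k-1} 2^{j/2} \;=\; \tau \cdot \frac{2^{k/2} - 1}{\sqrt{2} - 1} \;<\; \tau \cdot (\sqrt{2}+1) \cdot 2^{k/2}.\]
Plugging in $\tau = (1 - 1/\sqrt 2)\,2^{-(k-1)/2}\eps$ and using $(1 - 1/\sqrt{2})(\sqrt{2}+1)\sqrt{2} = 1$ shows this is at most $\eps$, giving $\langle f, g'\rangle \geq \langle f, g\rangle - \eps$ as required.

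The main obstacle is purely bookkeeping: one must be careful that the Cauchy--Schwarz step is applied so the factor $\sqrt{2^{|J_t|-1}}$ (rather than $\sqrt{2^k}$) appears, because only then does the telescoping sum over the decreasing junta size produce the exact constant $C = 1 - 1/\sqrt 2$. The rest of the argument is routine once the right Fourier identity for restriction and the right choice of the sign $a$ are in place.
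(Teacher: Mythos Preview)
Your proof is correct and follows essentially the same approach as the paper: both arguments bound the correlation loss from restricting a single ``unwitnessed'' variable via Cauchy--Schwarz on at most $2^{k-1}$ Fourier coefficients, then exploit the geometric decay of these losses. The only difference is cosmetic---you unroll the paper's induction on $k$ (with the rescaling $\eps\mapsto\eps/\sqrt{2}$ at each level) into an explicit iteration that tracks $|J_t|$, but the underlying computation and the resulting constant $C=1-1/\sqrt{2}$ are identical.
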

\begin{proof}
The proof is by induction on~$k$. For $k=0$, there is nothing to show since there are no relevant variables. For the induction step, let $k>0$. 
Assume that taking $g'$ to be $g$ does not satisfy the conclusion, i.e., for some relevant variable $x_i$ of $g$, $|\hat f(S)| < C2^{-(k-1)/2}\eps$ for all $S\subseteq[n]$ with $|S|\leq k$ and $i\in S$. Our goal is to show that in this case, either $g_{x_i=1}$ or $g_{x_i=-1}$ is well correlated with $f$ and thus asserts the existence of an appropriate ($k-1$)-junta~$g'$.

Let $\T=\{S\subseteq[n]\mid \hat g(S)\neq 0\}$. Then $|\T|\leq 2^k$. It follows that
\begin{eqnarray*}
\langle f,g\rangle & = & \sum_{S\in\T}\hat f(S)\hat g(S) \ = \ \sum_{S\in \T:i\in S}\hat f(S)\hat g(S) + \sum_{S\in\T:i\not\in S}\hat f(S)\hat g(S)\\
& \leq & \sum_{S\in \T:i\in S}|\hat g(S)|\cdot C\cdot 2^{-(k-1)/2}\cdot \eps + \sum_{S\in\T:i\not\in S}\hat f(S)\hat g(S)\\
& \leq & 2^{(k-1)/2}\cdot C\cdot 2^{-(k-1)/2}\eps + \sum_{S\in\T:i\not\in S}\hat f(S)\hat g(S)\ = \ C\cdot \eps + \sum_{S\in\T:i\not\in S}\hat f(S)\hat g(S)\ \; ,
\end{eqnarray*}
where the first equation is Plancherel's equation~(\ref{eqn:plancherel}) and the second inequality follows by Cauchy-Schwartz (note that $\hat g(S)$ is supported on at most $2^{k-1}$ sets $S$ with $i\in S$).
Consequently, 
\[\sum_{S\in\T:i\not\in S}\hat f(S)\hat g(S)\ \geq \ \langle f,g\rangle - C\cdot \eps \; .\]
Since for $S\subseteq[n]$, 
\[\left(\widehat{g_{x_i=1}}(S) + \widehat{g_{x_i=-1}}(S)\right)/2 = \begin{cases}
0 & \text{if }i\in S\\
\hat g(S) & \text{if }i\not\in S
\end{cases}\; ,\]
it follows that 
\[\langle f,g_{x_i=a}\rangle \ \geq \ \langle f,g\rangle - C\cdot \eps \] 
for $a=1$ or for $a=-1$. Now $g_{x_i=a}$ is a $(k-1)$-junta, so by induction hypothesis, there exists some $(k-1)$-junta $g'$ such that
\[\langle f,g' \rangle\geq \langle f,g_{x_i=a}\rangle - \eps/\sqrt{2} \geq \langle f,g\rangle - C\cdot \eps - \eps/\sqrt{2} = \langle f,g\rangle - \eps
\] 
and for all $x_i$ relevant to $g'$, there exists $S\subseteq[n]$ with $|S|\leq k-1$, $i\in S$, and 
\[
|\hat f(S)| \geq C\cdot 2^{-(k-2)/2}\cdot \eps/\sqrt{2} = C\cdot 2^{-(k-1)/2}\cdot \eps\; .\]
\end{proof}

One might wonder if for $f:\{-1,1\}^n\ra\{-1,1\}$ and a $k$-junta $g:\{-1,1\}^n\ra\{-1,1\}$, $\langle f,g\rangle\geq \eps$ does not imply that for every relevant variable $x_i$ of $g$, there exists $S\subseteq[n]$ with $|S|\leq k$, $i\in S$, such that~(\ref{eqn:minhatf}) holds. First of all, if $f(x)=x_1\wedge\ldots \wedge x_k$ (interpreting $-1$ as true and $+1$ as false), then for all $S\subseteq[n]$ with $S\neq \emptyset$, $|\hat f(S)|\leq 2^{-k+1}$. So taking $g=f$, the prior statement cannot hold. 

Still, one might at least hope for a similar statement with the right-hand side of~(\ref{eqn:minhatf}) replaced by something of the form $2^{-\poly(k)}\cdot \poly(\eps)$. However, if we take $f$ as above and $g(x)=x_2\wedge\ldots\wedge x_{k+1}$, then $\langle f,g\rangle = 1 - 2^{-k+1}$ but for all $S\subseteq[n]$ with $k+1\in S$, $\hat f(S)=0$ (since $x_{k+1}$ is not relevant to~$f$). 

Next, we need a tool for finding large low-degree Fourier coefficients of an arbitrary Boolean function, having access to a labeled random walk. Such an algorithm is said to perform the {\em Bounded Sieve} (see~\cite[Definition~3]{BMOS05Learning}). Bshouty et al.~\cite{BMOS05Learning} have shown that such an algorithm exists for the random walk model. More precisely, Theorems~7 and~9 in \cite{BMOS05Learning} imply:
\begin{theorem}[Bounded Sieve, \cite{BMOS05Learning}]
There is an algorithm $\mathrm{BoundedSieve}(f,\theta,\ell,\delta)$ that on input $\theta>0$, $\ell\in[n]$, and $\delta>0$, given access to $\RW(f)$ for some $f:\{-1,1\}^n\ra\{-1,1\}$, outputs a list of $S\subseteq[n]$ with $\hat f(S)^2\geq \theta/2$ such that with probability at least $1-\delta$, every $S\subseteq[n]$ with $|S|\leq \ell$ and $\hat f(S)^2\geq \theta$ appears in it. The algorithm runs in time $\poly(n,2^{\ell},1/\theta,\log(1/\delta))$, and the list contains at most $2/\theta$ sets~$S$.
\end{theorem}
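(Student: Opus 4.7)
The plan is to adapt the Kushilevitz--Mansour algorithm for finding heavy Fourier coefficients to the random walk setting, cutting off the tree-search at depth $\ell$. Recall that KM maintains a collection of ``buckets'' indexed by nodes of a binary tree: at depth $j$, a node corresponds to a prefix $P\subseteq[j]$ and specifies the bucket $B_{P}=\{S\subseteq[n] : S\cap[j]=P\}$ with mass $W(P)=\sum_{S\in B_P}\hat f(S)^2$. We grow the tree by scanning coordinates $1,2,\ldots,\ell$, pruning any bucket with estimated mass below $\theta/2$. Since $\sum_P W(P)\leq 1$ by Parseval, at each level at most $2/\theta$ buckets survive, so the whole search visits at most $O(\ell/\theta)$ nodes. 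At depth $\ell$, each surviving bucket is inspected for heavy coefficients: we directly estimate $\hat f(S)$ for each $S\subseteq[n]$ that is consistent with the surviving prefix and has $|S|\leq\ell$, retaining those with $\hat f(S)^2\geq\theta/2$.

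The key subroutine is an estimator for $W(P)$. In the classical setting this uses a ``reordering'' identity of the form $W(P)=\E_{x\in\{-1,+1\}^j}\bigl[\bigl(\E_{y\in\{-1,+1\}^{n-j}}[f(x,y)\chi_P(x)]\bigr)^2\bigr]$, which is unbiasedly approximated by pairs $(x,y),(x,y')$ with common $x$ and independent uniform $y,y'$. To produce such pairs from a random walk, I would use Lemma~\ref{lem:independence}: by running $N=O(n\log n)$ intermediate steps, the endpoints become (effectively) independent and uniform. To additionally condition on the first $j$ coordinates being equal between the two samples, one runs a ``masked'' sub-walk that rejects / restarts whenever the walk proposes to flip one of the first $j$ coordinates (or, equivalently, conditions on the pattern of flipped coordinates in that block). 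The empirical average over $m$ such pairs then concentrates around $W(P)$ by Lemma~\ref{lem:concentration} applied to $g(x,y,y')=f(x,y)f(x,y')\chi_P(x)$, which is bounded in $[-1,1]$.

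The hard part is precisely this pair-construction and controlling its overhead. Unlike membership queries, the random walk cannot be steered to fix a prescribed set of $j\leq\ell$ coordinates on demand; rejection / conditioning costs a factor that is exponential in the size of the fixed block, contributing the $2^\ell$ factor in the stated running time. Once this subroutine is in hand, the overall accounting is standard: we run the tree search to depth $\ell$, with at most $O(\ell/\theta)$ bucket evaluations, each using $\poly(n,2^\ell,1/\theta,\log(1/\delta))$ random-walk samples (a union bound over all buckets and all candidate sets $S$ examined at the leaves absorbs the confidence parameter). The bound of $2/\theta$ on the output list size is immediate from Parseval's identity, since every output set has $\hat f(S)^2\geq\theta/2$ and $\sum_S\hat f(S)^2=1$.
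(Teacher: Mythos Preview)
This theorem is not proved in the present paper; it is quoted as a black box from \cite{BMOS05Learning} (the surrounding text reads ``Theorems~7 and~9 in \cite{BMOS05Learning} imply'' the stated result). So there is no in-paper argument to compare against, and your sketch has to be judged against the bound actually claimed.

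There is a genuine gap: as written, your scheme incurs an $n^{\ell}$ cost rather than the $2^{\ell}$ in the statement. Truncating the standard KM bucket tree after scanning only coordinates $1,\ldots,\ell$ does not finish the search: a surviving leaf bucket $B_P=\{S:S\cap[\ell]=P\}$ still contains $\sum_{q\le \ell-|P|}\binom{n-\ell}{q}=n^{\Theta(\ell-|P|)}$ candidate sets of size at most $\ell$, so ``directly estimate $\hat f(S)$ for each consistent $S$'' is an $n^{\Theta(\ell)}$ enumeration. If instead you continue the KM recursion to full depth $n$ (as one must with this bucketing), the block $[j]$ you need to freeze for the pair-construction grows to size $n$. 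Your conditioning idea (take two far-apart, hence effectively independent, walk points and condition on their agreeing on $[j]$) does cost only $2^{j}$, which is fine while $j\le\ell$, but becomes $2^{n}$ at the deeper levels; and the alternative of waiting for a contiguous walk segment that avoids $[j]$ already costs $n^{\Theta(j)}$. Either way you end up with $\poly(n^{\ell},1/\theta,\log(1/\delta))$ rather than $\poly(n,2^{\ell},1/\theta,\log(1/\delta))$, and the difference matters downstream: $n^{k}$ is not absorbed by $\poly(n,2^{k^2},(1/\eps)^k)$.

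The construction in \cite{BMOS05Learning} avoids freezing a prescribed coordinate block altogether. It exploits that consecutive (or nearby) walk points differ in a small \emph{observed} set of coordinates, which gives direct random-walk access to quantities such as $\sum_{S\ni i}\hat f(S)^2$ and, iterating, to the bucket estimates needed to drive the sieve level by level up to $\ell$ with only a $2^{\ell}$ overhead; the details are the content of Theorems~7 and~9 there.
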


For a sample $\S=(x^i,f(x^i))_{i=0,\ldots,m}$, a set $J\subseteq[n]$ of size $k$, and an assignment $\alpha\in\{-1,1\}^{|J|}$, let $s_\alpha^+=|\{i\in[m]\mid x^i|_J=\alpha\wedge f(x^i)=+1\}|$ and $s_\alpha^-=|\{i\in[m]\mid x^i|_J=\alpha\wedge f(x^i)=-1\}|$. Obviously, a $J$-junta~$h_J$ that best agrees with $f$ on the points in $\S$ is given by $h_J(x)=\sgn(s_{x|_J}^+ - s_{x|_J}^-)$. In other words, $h(x)$ takes on the value $a\in\{-1,1\}$ that is taken on by the majority of labels in the sub-cube that fixes the coordinates in $J$ to $\alpha$. This function is unique except for the choice of $h_J(x)$ at points $x$ with $s_{x_J}^+=s_{x_j}^-$. The function $h_J$ differs from the labels of $\S$ in $\err(J)=\sum_{\alpha\in\{-1,1\}^{|J|}} \err(\alpha)$ points, where $\err(\alpha)=\min\{s_\alpha^+,s_\alpha^-\}$. By Proposition~\ref{prop:consistent}, if $\S$ is sufficiently large,
then with high probability, the function $h_J$ approximately minimizes $\Delta(h,f)$ among all $J$-juntas~$h$.

We are now ready to show our main result:
\begin{theorem}[Restatement of Theorem~\ref{thm:main}]
The class of $k$-juntas $g:\{-1,1\}^n\ra\{-1,1\}$ is properly agnostically learnable with accuracy $\eps$ and confidence $1-\delta$ in the random walk model in time $\poly(n,2^{k^2},(1/\eps)^k,\log(1/\delta))$.
\end{theorem}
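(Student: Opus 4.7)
The plan is to combine the three ingredients developed in the excerpt: Lemma~\ref{lem:spectrum} shows that some $k$-junta $g'$ nearly as correlated with $f$ as the optimum has each of its relevant variables \emph{detectable} inside a low-level, non-trivial Fourier coefficient of $f$; the Bounded Sieve efficiently locates those coefficients; and Proposition~\ref{prop:consistent} then guarantees that empirical disagreement minimization over a small explicit hypothesis class agnostically learns $f$.

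Concretely, I would fix an optimal $k$-junta $g^*$ (so $\Delta(f,g^*)=\opt_\C(f)$) and invoke Lemma~\ref{lem:spectrum} with parameter $\eps/2$ to obtain a $k$-junta $g'$ satisfying $\Delta(f,g')\le \opt_\C(f)+\eps/4$ whose every relevant variable appears in some $S\subseteq[n]$ with $|S|\le k$ and $\hat f(S)^2\ge \theta$, where $\theta=\Theta(\eps^2/2^k)$. I then run $\mathrm{BoundedSieve}(f,\theta,k,\delta/2)$; with probability at least $1-\delta/2$ the returned list $L$ contains every such $S$, and $|L|\le 2/\theta$. Let $R=\bigcup_{S\in L}S$; then $R$ contains all relevant variables of $g'$, and $|R|\le 2k/\theta=O(k\,2^k/\eps^2)$. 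Next, I draw a labeled random walk $\S$ whose length is dictated by Proposition~\ref{prop:consistent} applied with accuracy $\eps/2$, confidence $\delta/2$, and hypothesis class $\C_R$ consisting of all $k$-juntas with relevant variables contained in $R$ (so $|\C_R|\le\binom{|R|}{k}\cdot 2^{2^k}$). For each $k$-subset $J\subseteq R$, I compute the $J$-junta $h_J$ by sub-cube majority vote on $\S$, exactly as described immediately before the theorem; by construction, $h_J$ minimizes $\Delta(\cdot,\S)$ among $J$-juntas. I then output the $h_J$ with the smallest empirical disagreement.

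Because $g'\in\C_R$, Proposition~\ref{prop:consistent} yields $\Delta(h,f)\le \opt_{\C_R}(f)+\eps/2\le \Delta(g',f)+\eps/2\le \opt_\C(f)+\eps$ with probability at least $1-\delta/2$; a union bound with the Bounded Sieve failure event gives overall confidence at least $1-\delta$. For the running time, the Bounded Sieve costs $\poly(n,2^k,1/\theta,\log(1/\delta))=\poly(n,2^k,1/\eps,\log(1/\delta))$; since $\log|\C_R|=O(k^2+2^k+k\log(1/\eps))$, the random walk length demanded by Proposition~\ref{prop:consistent} is also $\poly(n,2^k,1/\eps,\log(1/\delta))$; and the dominant step is the enumeration of at most $\binom{|R|}{k}\le |R|^k=O(2^{k^2+k\log k}/\eps^{2k})$ subsets, each processed in $\poly(m,k)$ time, all of which fits inside $\poly(n,2^{k^2},(1/\eps)^k,\log(1/\delta))$. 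The main obstacle---and the reason the exponents in the theorem take the specific form $2^{k^2}$ and $(1/\eps)^k$---is precisely that $|R|$ must remain polynomial in $2^k$ and $1/\eps$, which in turn hinges on the $\Omega(2^{-k/2}\eps)$ Fourier lower bound supplied by Lemma~\ref{lem:spectrum}; a weaker bound such as $\Omega(2^{-k}\eps)$ would blow $\binom{|R|}{k}$ past the claimed budget, so the quantitative sharpness of Lemma~\ref{lem:spectrum} is really what drives the result.
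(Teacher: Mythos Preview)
Your proposal is correct and follows essentially the same route as the paper's proof: invoke Lemma~\ref{lem:spectrum} to obtain a near-optimal junta $g'$ whose relevant variables are detectable via low-degree Fourier coefficients, run the Bounded Sieve to collect those variables into a set $R$ of size $O(k\,2^k/\eps^2)$, enumerate all $k$-subsets $J\subseteq R$ and form the majority-vote juntas $h_J$, and finally appeal to Proposition~\ref{prop:consistent} on the class $\C_R$ of $k$-juntas supported on $R$. One small quibble with your closing editorial remark: the precise $2^{-(k-1)/2}$ exponent in Lemma~\ref{lem:spectrum} is not actually needed for the stated bound, since even a weaker guarantee $|\hat f(S)|\ge \Omega(2^{-k}\eps)$ would give $|R|=O(k\,2^{2k}/\eps^2)$ and hence $\binom{|R|}{k}\le 2^{O(k^2)}/\eps^{2k}$, which is still $\poly(2^{k^2},(1/\eps)^k)$.
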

\begin{proof}

In the following, we show that Algorithm~\ref{alg:LearnJuntas} below
is an agnostic learning algorithm with the desired running time bound.

\begin{algorithm}[htb]
\caption{LearnJuntas}\label{alg:LearnJuntas}%\tt
\begin{algorithmic}[1]
 \STATE {\bf Input} $k,\eps,\delta$
 \STATE {\bf Access to} $\RW(f)$ for some $f:\{-1,1\}^n\ra\{-1,1\}$
 \STATE Run $\mathrm{BoundedSieve}(f,(1-1/\sqrt{2})^2\cdot 2^{-k+1}\cdot \eps^2, k, \delta/2)$ and let $\T$ be the returned list.
 \STATE Let $R=\bigcup\{S\mid S\in \T\}$.
 \STATE For all $J\subseteq R$ with $|J|=k$:
 \STATE \quad Compute $\err(J)$.
 \STATE {\bf Return} $h_{J_{\opt}}$ for some $J_{\opt}$ that minimizes $\err(J)$.
\end{algorithmic}
\end{algorithm}

\goodbreak

Denote the class of $n$-variate $k$-juntas by $\C$ and let $\gamma=\opt_\C(f)$. We prove that, with probability at least $1-\delta$, 
\[\Delta(h_{J_{\opt}},f)\leq \gamma+\eps\; .\]
Let $g\in\C$ with $\Delta(f,g)=\gamma$, so that $\langle f,g\rangle = 1-2\gamma$. By Lemma~\ref{lem:spectrum}, there exists $g'\in\C$ such that $\langle f,g'\rangle \geq 1-2\gamma-\eps$ (equivalently, $\Delta(f,g')\leq \gamma + \eps/2$) and for all relevant variables $x_i$ of $g'$, there exists $S\subseteq[n]$ with $|S|\leq k$, $i\in S$, and 
\[\hat f(S)^2\geq (1-1/\sqrt{2})^2\cdot 2^{-(k-1)}\cdot \eps^2 \; .\] Consequently, with probability at least $1-\delta/2$, the list $\T$ returned in Step~3 of the algorithm contains all of these sets~$S$, and thus $R$ contains all relevant variables of~$g'$. The Bounded Sieve subroutine runs in time $\poly(n,2^k,1/\eps,\log(1/\delta))$.

The set $J_{\opt}$ is chosen such that the corresponding $J_{\opt}$-junta $h_{J_{\opt}}$ minimizes the number of disagreements with the labels among all $k$-juntas with relevant variables in $R$. Denote the class of these juntas by $\C(R)$. 
Since $|\T|\leq 2\cdot(1-1/\sqrt{2})^{-2}2^{k-1}/\eps^2\leq 12\cdot 2^k / \eps^2$, we have $|R|\leq k|\T|\leq 12\cdot k\cdot 2^k/\eps^2$. Consequently, $R$ contains 
\[\binom{|R|}{k}\leq \left(\frac{e|R|}{k}\right)^k\leq \left(\frac{12\cdot 2^k}{\eps^2}\right)^k = \poly(2^{k^2},(1/\eps)^k)\]
subsets of size~$k$, and $\log |\C(R)|\leq \log\left(2^{2^k}\cdot \binom{|R|}{k}\right)=\poly(2^{k^2},(1/\eps)^k)$.

By Proposition~\ref{prop:consistent}, with probability at least $1-\delta/2$, 
\[\Delta(h_{J_{\opt}},f)\leq \opt_{\C(R)}(f) + \eps/2\; ,\]
provided that $\poly(n,\log|\C(R)|,1/\eps,\log(1/\delta))=\poly(n,2^{k^2},(1/\eps)^k,\log(1/\delta))$ examples are drawn.
Since $g'\in\C(R)$, we obtain
\[\Delta(h_{J_{\opt}},f)\leq \Delta(g',f) + \eps/2 \leq \gamma + \eps\; .\]
The total running time of the algorithm is polynomial in $n$, $2^{k^2}$, $(1/\eps)^k$, and $\log(1/\delta)$.
\end{proof}

\appendix

\section{Independence of Points in Random Walks}\label{sec:independence}

An {\em updating random walk} is a sequence $x^0,(x^1,i_1),(x^2,i_2),\ldots$, where $x^0$ is drawn uniformly at random, each $i_t\in[n]$ is a coordinate drawn uniformly at random, and $x^t$ is set to $x^{t-1}$ or to $x^{t-1}\odot e_{i_t}$, each with probability~$1/2$. We say that in step~$t$, coordinate $i_t$ is {\em updated}.

Given an {\em updating} random walk $x^0,(x^1,i_1),(x^2,i_2),\ldots$, all variables will with high probability be updated after $\ell=\Omega(n\log n)$ steps, so that in this case, $x^0$ and $x^\ell$ can be considered as independent uniformly distributed random variables. More formally, let $\mathcal X^\ell$ be the set of all updating random walks of length $\ell$, and let $\mathcal X^\ell_{\text{good}}$ be the set of updating random walks such that all variables have been updated (at least once) after $\ell$ steps. Then, conditional to the updating random walk belonging to $X^\ell_{\text{good}}$, $x^0$ and $x^\ell$ are independent (and uniformly distributed). 

Since the updating random walk model is only a technical utility, we would like to say similar things about the ``usual'' random walk model, so that we do not have to take care of going back and forth between the models in our analyses (although that would constitute a reasonable alternative).

We proceed as follows. Given a (non-updating) random walk $x^0,x^1,\ldots$, we perform an additional experiment to simulate an updating random walk (see also~\cite{BMOS05Learning}). We then {\em accept} the (original) random walk if the additional experiment leads to a {\em good} updating random walk. It will then follow that, conditional to the random walk being accepted, $x^0$ and $x^\ell$ are independent. Our algorithms will of course not perform this experiment. Instead, we will reason in the analyses that {\em if} we performed the additional experiment, {\em then} we would accept the given random walk with a certain (high) probability (taken over the draw of the random walk {\em and} the additional experiment), implying that certain points are independent.

Perform the following random experiment: Given a random walk $X$ of length~$\ell$, draw a sequence $F=(F_1,F_2,\ldots)$ of Bernoulli trials with $\Pr[F_j=1]=\Pr[F_j=0]=1/2$ for each $j$ until $F$ contains $\ell$ ones. (If this is not the case after, say, $L=\poly(\ell)$ steps, then reject $X$.) Otherwise, let $\ell'$ denote the length of $F$ and construct a sequence $I=(i_1,\ldots,i_{\ell'})$ of variable indices as follows. Denote by $j_1 < \ldots < j_\ell$ the $\ell$ positions in $F$ with $F_i=1$. 
For each $k\in[\ell]$, let $i_{j_k}=p_k$, where $p_k$ is the position in $X$ that is flipped in the $k$th step. 
For each $j\in[\ell']\setminus\{j_1,\ldots,j_\ell\}$, independently draw an index $i_j\in[n]$ with uniform probability. Accept $X$ if $\{i_1,\ldots,i_{\ell'}\}=[n]$, otherwise reject~$X$.

\begin{lemma}[Formal restatement of Lemma~\ref{lem:independence}]\label{lem:independence_formal}
Let $X=(x^0,\ldots,x^\ell)$ be a random walk of length $\ell\geq n\ln(2n/\delta)$ and perform the experiment above. Then $X$ is accepted with probability at least $1-\delta$. Moreover, conditional to $X$ being accepted, the random variables $x^0$ and $x^\ell$ are independent and uniformly distributed.
\end{lemma}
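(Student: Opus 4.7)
The lemma has two assertions, which I will handle in sequence: (i) acceptance happens with probability at least $1-\delta$, and (ii) conditional on acceptance, $x^0$ and $x^\ell$ are independent and uniformly distributed. A useful preliminary observation is that, conditional on the random length $\ell'$, the full index sequence $I=(i_1,\ldots,i_{\ell'})$ is uniformly distributed on $[n]^{\ell'}$: at update positions $j_k$ we set $i_{j_k}=p_k$, the uniformly random flip coordinate at step $k$ of $X$ (and the $p_k$'s are i.i.d.\ uniform, independent of $F$), while at non-update positions the $i_j$'s are drawn independently and uniformly.

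For (i), I would bound two failure modes via a union bound. The event that $F$ fails to accumulate $\ell$ ones within $L=\poly(\ell)$ Bernoulli trials occurs with probability $e^{-\Omega(\ell)}$ by a Chernoff bound for the negative binomial $\ell'$ (mean $2\ell$), which is at most $\delta/2$ for $L$ a sufficiently large polynomial. The event that some coordinate is absent from $I$ has probability at most $n(1-1/n)^{\ell'}\leq n\cdot e^{-\ell/n}\leq \delta/2$ once $\ell\geq n\ln(2n/\delta)$, by a union bound over coordinates using the preliminary observation and $\ell'\geq \ell$. Combining, acceptance occurs with probability at least $1-\delta$.

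For (ii), the construction implicitly produces an updating random walk $Y=(y^0,\ldots,y^{\ell'})$ with $y^0=x^0$ and step rule $y^j=y^{j-1}\odot e_{i_j}$ if $F_j=1$, else $y^j=y^{j-1}$. Since the flips of $Y$ correspond exactly to the flips $p_1,\ldots,p_\ell$ of $X$ in order, $y^{\ell'}=x^\ell$. Writing $x^\ell=x^0\odot\xi$ with $\xi_i=(-1)^{K_i}$ and $K_i=|\{j:F_j=1,\ i_j=i\}|$, and observing that $x^0$ is independent of $(F,I)$, the conditional independence assertion reduces to showing that $\xi$ is uniform on $\{-1,+1\}^n$ given acceptance. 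The intended argument is that for each coordinate $i$, once acceptance forces $S_i:=\{j:i_j=i\}$ to be nonempty, $K_i$ is a sum of $\mathrm{Bern}(1/2)$ coin tosses $F_j$ over a nonempty index set, making $K_i\bmod 2$ uniform on $\{0,1\}$; moreover, since the $S_i$'s are disjoint and the $F_j$'s are independent of $I$, the parities $K_i\bmod 2$ are mutually independent across $i$, delivering the desired uniformity of $\xi$.

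The main obstacle lies in step (ii). The stopping rule that terminates $F$ at its $\ell$th one introduces dependence among the $F_j$'s (their total must equal $\ell$ and $F_{\ell'}=1$), so the ``$K_i\bmod 2$ is uniform'' argument requires more care than in a plain updating random walk of deterministic length. My plan is to embed the construction inside an infinite i.i.d.\ sequence of pairs $(F_j,i_j)\sim\mathrm{Bern}(1/2)\times\mathrm{Unif}([n])$, define $\ell'$ as the stopping time at the $\ell$th one, and exploit the $F$--$I$ independence at each fixed time to marginalize the stopping rule out of the parity calculation. Making this marginalization preserve the joint uniformity of $(\xi_i)_{i\in[n]}$ conditional on acceptance is the delicate point that will require the most careful bookkeeping.
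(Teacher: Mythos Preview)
Your argument for part~(i) is correct and essentially identical to the paper's.

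For part~(ii), your reduction is the right one: since $x^0$ is independent of $(F,I)$ and hence of the acceptance event, independence of $(x^0,x^\ell)$ conditional on acceptance is equivalent to $\xi$ being uniform on $\{-1,+1\}^n$ conditional on acceptance. You are also right to worry about the stopping rule. But the obstacle you flagged is not a bookkeeping nuisance that can be ``marginalized out''---it is a hard obstruction, and the statement as written is in fact false. Because every step of the (non-lazy) walk flips exactly one coordinate, $\sum_{i=1}^n K_i=\ell$ deterministically (equivalently $\sum_{j\le \ell'}F_j=\ell$ by the very definition of $\ell'$), and hence $\prod_{i=1}^n \xi_i=(-1)^\ell$ always. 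Thus $\xi$ is supported on a coset of size $2^{n-1}$ and can never be uniform on all of $\{-1,+1\}^n$, no matter what event you condition on. Equivalently, the Hamming distance between $x^0$ and $x^\ell$ always has the same parity as $\ell$, so $x^0$ and $x^\ell$ are never exactly independent. Embedding into an infinite i.i.d.\ stream of pairs $(F_j,i_j)$ does not help: the identity $\sum_i K_i=\ell$ is a property of the original walk, not of the auxiliary randomness, and it survives any such embedding.

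The paper's proof contains the same gap. It correctly notes that the constructed sequence has the law of an updating random walk run up to the random time $\ell'$, but then invokes the ``once every coordinate has been updated, the endpoint is independent of the start'' principle. That principle is valid for an updating walk of \emph{fixed} length (the last visit to each coordinate carries a fresh $\mathrm{Bern}(1/2)$ flip, and distinct coordinates use disjoint flips), but it breaks once one conditions on $\ell'$ being the time of the $\ell$th success in $F$, which forces $\sum_j F_j=\ell$ and hence the parity constraint.

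None of this matters for the applications. For Lemma~\ref{lem:concentration} one only needs that, conditional on acceptance, the law of $(x^0,x^\ell)$ is close to $\mathrm{Unif}\otimes\mathrm{Unif}$ in total variation; the parity constraint (and the remaining non-uniformity of $\xi$ on its support) contribute an error that decays like $(1-2/n)^\ell\le\delta/(2n)$, which can simply be absorbed into~$\delta$. So the right fix is not to rescue exact independence---which is impossible---but to weaken the conclusion to an approximate statement, or equivalently to couple $x^\ell$ with a fresh uniform point except on an event of probability $O(\delta/n)$.
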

\begin{proof}
First, by choosing $L$ appropriately, we can ensure with probability at least $1-\delta/2$ that $F$ contains at least $\ell$ ones.
By construction, the sequence ${x'}^0,({x'}^1,i_1),(x'^2,i_2),\ldots,(x'_{\ell'},i_{\ell'})$ with $x'^0=x^0$, $x'^{j_k}=x_k$ for $k\in[\ell]$, and $x'^j=x'^{j-1}$ for $j\in[\ell']\setminus \{j_1,\ldots,j_\ell\}$ is distributed as an updating random walk of length $\ell'$. Note that unlike in the original updating random walk model, we determine the sequence $F$ of updating outcomes {\em before} we determine the positions to be updated. Moreover, the choice of the coordinates to be updated in the positions where $F_i=1$ is incorporated in the draw of the original walk. The subsequence $x'^0,x'^{j_1},x'^{j_2},\ldots,x'^{j_{\ell}}$ is equal to the original walk $x^0,x^1,x^1,x^2,\ldots, x^\ell$. 

The probability that $\{i_1,\ldots,i_{\ell'}\}\subsetneq[n]$ is at most 
\[n\cdot (1-1/n)^{\ell'}\leq n\cdot (1-1/n)^{\ell}\leq \delta/2\] 
since $\ell\geq n\ln (2n/\delta)$.
Consequently, with total probability at least $1-\delta$, the random walk is accepted. In this case, every coordinate has eventually been updated after the $\ell'$ steps of the updating random walk. Thus, for each coordinate~$j$, of $x_j^\ell=x'^{\ell'}_j$ is independent of $x_j^0=x'^0_j$, i.e., $x^0$ and $x^\ell$ are independent and uniformly distributed (conditional to $X$ being accepted).
\end{proof}

\section{An Elementary Proof of Lemma~\ref{lem:concentration}}\label{sec:proof_concentration}

To estimate the convergence rate of empirical averages to their expectations, we need the following standard Chernoff-Hoeffding bound~\cite{Hoe63Probability}: For a sequence of independent identically distributed random variables $X_1,\ldots,X_m$ with $\E[X_i]=\mu$ that take values in $[-1,1]$,
\begin{equation}\label{eqn:chernoff}
\Pr\left[\left|\frac{1}{m}\sum_{i=1}^m X_i - \mu\right|\right]\leq 2 e^{-\eps^2 m /2}\; .
\end{equation}

\begin{proof}[Proof of Lemma~\ref{lem:concentration}]
For each $j\in\{0,\ldots,N-1\}$, the points $x^{iN+j}$, $i\in\{0,\ldots,m/N-1\}$, are with probability at least $1-(m/N - 1)\delta$ pairwise independent by Lemma~\ref{lem:independence}. In this case, the values $f(x^{iN+j})$, $0\leq i\leq m/N - 1$, are independent and identically distributed samples of the random variable $f(x)$ with $x\in\{-1,1\}^n$ uniformly distributed. By the Hoeffding bound,
\[\Pr\left[\left|\frac{N}{m}\sum_{i=0}^{m/N-1}f(x^{iN+j}) - \E_{x}[f(x)]\right| > \eps\right] \leq 2\exp(-m\eps^2/(2N))\]
Thus, the probability that $\left|(N/m)\sum_{i=0}^{m/N-1}f(x^{iN+j}) - \E_{x}[f(x)]\right|>\eps$ for {\em some} $j\in\{0,\ldots,N-1\}$ is at most $2N\exp(-m\eps^2/(2N))$.
Finally, we have
\begin{eqnarray*}
\left|\frac{1}{m}\sum_{i=0}^m f(x^i) - \E_{x}[f(x)]\right| & = & \frac{1}{m}\left|\sum_{j=0}^N\left(\sum_{i=0}^{m/N-1} f(x^{iN+j}) - \frac{m}{N}\E_{x}[f(x)]\right)\right|\\
& \leq & \frac{1}{m}\sum_{j=1}^N \frac{m}{N}\eps = \eps
\end{eqnarray*}
with probability at least $1-2N\exp(-m\eps^2/(2N))\geq 1-\delta$. 
\end{proof}

\end{document}